\newcommand{\bu}{\boldsymbol{u}}
\newcommand{\bU}{\boldsymbol{U}}
\newcommand{\bl}{\boldsymbol{l}}
\newcommand{\bW}{\boldsymbol{W}}
\newcommand{\bS}{\boldsymbol{S}}
\newcommand{\bL}{\boldsymbol{L}}
\newcommand{\bx}{\boldsymbol{x}}
\newcommand{\be}{\boldsymbol{e}}
\newcommand{\bX}{\boldsymbol{X}}
\newcommand{\bgamma}{\boldsymbol{\gamma}}
\newcommand{\brho}{\boldsymbol{\rho}}
\newcommand{\btheta}{\boldsymbol{\theta}}
\newcommand{\bdelta}{\boldsymbol{\delta}}
\newcommand{\TP}{\mathrm{TP}}
\newcommand{\FP}{\mathrm{FP}}
\newcommand{\FN}{\mathrm{FN}}
\DeclareMathOperator{\arctg}{arctg}
\DeclareMathOperator{\argmax}{argmax}
\begin{document}


\title{Multitask Hopfield Networks}

\author{Marco Frasca\inst{1}\orcidID{0000-0002-4170-0922}, Giuliano Grossi\inst{1}\orcidID{0000-0001-9274-4047} and Giorgio Valentini\inst{1}\orcidID{0000-0002-5694-3919}}

\institute{Dipartimento di Informatica, Universit\`a degli Studi di Milano, Via Celoria 18, 20135 Milano, Italy\\
\email{\{frasca,grossi,valentini\}@di.unimi.it}
}

\maketitle
\begin{abstract}
Multitask algorithms typically use task similarity information as a bias to speed up 
and improve the performance of learning processes. 
Tasks are learned jointly, sharing information across them, in
order to construct models more accurate than those learned separately
over single tasks. In this contribution, we present the first multitask
model, to our knowledge, based on Hopfield Networks (HNs), named
HoMTask. We show that by appropriately building a unique HN embedding all tasks, 
a more robust and effective classification model can be learned. 
HoMTask is a transductive semi-supervised parametric HN, that
minimizes an energy function extended to all nodes and to all tasks under
study. We provide theoretical evidence that the optimal parameters automatically 
estimated by HoMTask make coherent the model itself with the prior knowledge 
(connection weights and node labels). The convergence properties of HNs are preserved, 
and the fixed point reached by the network dynamics gives rise to the prediction of unlabeled nodes. The proposed model improves the classification abilities of singletask HNs on a preliminary benchmark comparison, and achieves
  competitive performance with  state-of-the-art semi-supervised graph-based algorithms.


\keywords{Multitask Hopfield networks, multitask learning, multitask classification}

\end{abstract}




\section{Introduction}
Multitask learning is concerned with simultaneously learning multiple prediction tasks that are related to
one another. It has been frequently observed in the recent literature that, when there are relations between the tasks, it can be advantageous to learn them simultaneously instead of learning each task separately~\cite{Caruana97,Evgeniou05}. A major challenge in multitask learning is how to selectively screen the sharing of information so that unrelated tasks do not end up influencing each other. Sharing information between two unrelated tasks can worsen the performance of both tasks.

Multitasking thus plays an important role in a variety of practical situations, including: the prediction of user ratings for unseen items based on rating information from related users~\cite{Ning10}, the simultaneously forecasting of many related financial indicators~\cite{Greene02}, the categorization of genes associated with a genetic disorder by exploiting genes associated with related diseases~\cite{gene2disco}.

There is a vast literature on multitask learning. The most important lines of work include: regularizers biasing the solution towards functions that lie geometrically close to each other in a RKHS~\cite{Evgeniou04,Evgeniou05}, or lie in a low dimensional subspace~\cite{argyriou2008convex,kang2011learning}; structural risk minimization methods, where multitask relations are established by enforcing predictive functions for the different tasks to belong to the same hypothesis set~\cite{Ando05}; spectral~\cite{evgeniou2007multi,argyriou2007spectral} and cluster-based~\cite{jacob2009clustered,xue2007multi} assumptions on the task relatedness; Bayesian approaches where task parameters share a common prior~\cite{yu2005learning,daume2009bayesian,zhou2011clustered}; methods allowing a small number of outlier tasks that are not related to any other task~\cite{yu2007robust,chen2011integrating}; approaches attempting to learn the full task covariance matrix~\cite{zhang2010learning,guo2011sparse}.
To our knowledge, no multitask attempts have been proposed for Hopfield networks (HNs)~\cite{Hop82}, whereas several studies investigated HNs as singletask classifier~\cite{Jacyna89,Karaoz04,Sion13,Hu17}. Indeed, HNs are efficient local optimizers, using the local minima of the energy function determined by network dynamics as a proxy to node classification. 

In this paper we develop \textit{HoMTask}, \textit{Hopfield multitask Network}, an approach to multitask learning based on exploiting a family of parametric HNs. Our approach builds on COSNet~\cite{Bertoni11}, a singletask HN  proposed to classify instances in a transductive semi-supervised scenario with unbalanced data. 
A main feature of HoMTask is that the energy function is extended to all tasks to be learned and to all instances (labeled and unlabeled), so as to learn the model parameters and to infer the node labels  simultaneously for all tasks. The obtained network can be seen as a collection of singletask HNs, appropriately interconnected by exploiting the task relatedness.  In particular, each task is associated with a couple of parameters determining the neuron activation values and thresholds, and we theoretically proved that in the optimal case, the learning procedure adopted is able to learn the parameters so as to move the multitask state of the labeled sub-network to a minimum of the energy.  
This is an important result, which allows the model to better fit the input data, since the classification of unlabeled nodes is based upon a minimum of the unlabeled subnetwork. 
Another interesting feature of HoMTask is that the complexity of the learning procedure linearly increases with the number of tasks, thus allowing the model to nicely scale on settings including numerous tasks.
Finally, a proof of convergence of the multitask dynamics to a minimum of the energy is also supplied. 

Experiments on a real-world classification problem have shown that HoMTask remarkably outperforms singletask HNs, and has competitive performance with state-of-the-art graph-based methods proposed in the same context.
\section{Methods}

\subsection{Problem definition}\label{sub:problem}
The problem input is composed of
 an undirected weighted graph $G(V, \bW)$, where $V=\{1, 2, \ldots, n\}$ is the set of instances and the non negative symmetric matrix $\bW=\left(w_{ij}\right)$ denotes the degree of functional similarity between each pair of nodes $i$ and $j$. 
A set of binary learning tasks $C=\{c_k| k=1, 2, \dots, m\}$ over $G$ is given, 
where for every task $c_k$, $V$ is labelled with $\lbrace +, - \rbrace$. The labeling is known only for the subset $L\subset V$, whereas it is unknown for $U:=V\setminus L$. Moreover, the subsets of vertices labelled with $+$ (positive) and $-$ (negative) are denoted by $L_{k,+}$ and $L_{k,-}$, respectively, for each task $c_k \in C$.
Without loss of generality, we assume $U=\{1, 2, \cdots, h\}$ and $L = \{h+1, h+2, \cdots, n\}$. As further assumption, task labelings are highly unbalanced, that is $\frac{|L_{k,+}|}{|L_{k,-}|}\ll1$, for each $k\in \{1, 2, \ldots, m\}$.
In the multitask scenario, a $m \times m$ symmetric matrix $\bS=s_{kr}|_{k,r=1}^m$ is also given, where $s_{kr} \in [0,1]$ is an index of relatedness/similarity between the tasks $c_k$ and $c_r$, and $s_{kk}=0$ for each $k \in \{1, 2, \dots, m\}$. 

The aim is determining a set of bipartitions $(U_{k,+},U_{k,-})$ of vertices in $U$ for each task $c_k \in C$ by jointly learning tasks in $C$, on the basis of the prior information encoded in $G$ and $\bS$.

In the following, the bold font is adopted to denote vectors and matrices, and the calligraphic font to denote multitask Hopfield networks. {Moreover, we denote by $\bW_{LL}$ and $\bW_{UU}$ the submatrices of $\bW$ relative to nodes $L$ and $U$, respectively}. 
\subsection{Previous singletask modelling}
In this section we recall the basic model proposed in \cite{Bertoni11,Frasca13} for singletask modeling, named \textit{COSNet}, that has inspired the multitask setting presented here.
Essentially, it relies on a parametric family of the Hopfield model~\cite{Hop82}, where  the network parameters are learned to cope with the label imbalance and the network equilibrium point is interpreted to classify the unlabeled nodes. A COSNet network over $G=\langle V, \bW\rangle$ is a the triple $H=\langle \bW, \lambda, \rho\rangle$, where
 $\lambda\in \mathbb{R}$ denotes the neuron activation threshold (unique for all neurons), and $\rho\in[0,\frac{\pi}{2})$ is a parameter which determines the two neuron activation (state) values $\{\sin\rho,-\cos\rho\}$. The model parameters are appropriately learned in order to allow the algorithm to counterbalance the large imbalance towards negatives (see~\cite{Frasca13}).
The initial state of a neuron $i\in V$ is set to $x_i(0) = \sin\rho$, if $i$ is positive,  $x_i(0) = -\cos\rho$, if $i$ is negative, and $x_i(0) = 0$ when  $i$ in unlabeled. The network evolves according to the following asynchronous dynamics:
\begin{equation}
\begin{small}
{\begin{scriptsize} x_i(t)= \end{scriptsize}} \left\{ \begin{array}{rl}
\sin\rho &\mbox{\hspace{0.3cm}if\ } \overset{i-1}{\underset{j=1}\sum} w_{ij}x_j(t) + \overset{n}{\underset{k=i+1}\sum} w_{ik}x_k(t-1) - \lambda > 0\\
-\cos\rho &\mbox{\hspace{0.3cm}if\ } \overset{i-1}{\underset{j=1}\sum} w_{ij}x_j(t) + \overset{n}{\underset{k=i+1}\sum} w_{ik}x_k(t-1) - \lambda \leq 0\\
\end{array}
\right.
\label{eq:HN_update}
\end{small}
\end{equation}
where $x_i(t)$ is the state of neuron $i \in V$ at time $t$. 
At each time $t$, the vector $\bx(t) = (x_1(t), x_2(t), \ldots, x_h(t))$ represents the state of the whole network. The network admits a state function named \textit{energy function}:
\begin{equation}
E(\bx) = -\frac{1}{2} \sum_{i \neq j}{ w_{ij}x_i x_j } + \lambda\sum_{i = 1}^{n}{x_i}.
\label{eq:energy}
\end{equation} 
The convergence properties of the dynamics~(\ref{eq:HN_update}) depend on the weight matrix structure $\bW$ and the rule by which the nodes are updated. In particular, if the matrix is symmetric and the dynamic is asynchronous, it has been proved that the network converges to a stable state in polynomial time.
As a major result, it has been shown that \eqref{eq:energy} is a Lyapunov function for the Hopfield dynamical systems with  asynchronous dynamics, i.e., for each $t>0$, $E(\bx(t+1))\leq E(\bx(t))$ and exists a time $\bar t$ such that $E(\bx(t))=E(\bx(\bar t))$, for all $t\geq\bar t$.    
Moreover, the reached fixed point $\bar\bx=\bx(\bar t)$ is a local minimum of \eqref{eq:energy}. Then, a neuron $i$ in $U$ is classified as positive if $\bar x_i = \sin\rho$, as negative otherwise. 


\subsection{Multitask Hopfield networks}\label{sub_homtask}
A \textit{Hopfield multitask network}, named \textit{HoMTask},  with neurons $V$ is a quadruple $\mathcal{H}=\langle \bW, \bgamma, \brho, \bS\rangle$, where $\bS$ is the task similarity matrix, $\bgamma = (\gamma_1, \ldots, \gamma_m) \in \mathbb{R}^m$, $\brho = (\rho_1, \ldots, \rho_m) \in [\frac{\pi}{4}, \frac{\pi}{2})^m$. {The couple of parameters $(\gamma_k,\rho_k)$ is associated with task $c_k$,  for each $k \in \{1,2,\ldots,m\}$: by leveraging the approach adopted in COSNet, for a task $c_k$, the neuron activation values are $\{\sin\rho_k, -\cos\rho_k \}$, whereas $\gamma_k$ is the neuron activation threshold (the same for every neuron). Such a formalization allows to keep the absolute activation value in the range $[0,1]$, and to calibrate it by suitably learning $\rho_k \in [\frac{\pi}{4}, \frac{\pi}{2})$. For instance, in presence of a large majority of negative neurons, $\rho_k$ close to $\frac{\pi}{2}$ would prevent positive neurons to be  overwhelmed during the net dynamics.}

The state of the network is the $n\times m$ matrix $\bX = (\bx^{(1)}, \bx^{(2)}, \ldots, \bx^{(m)})$, where $\bx^{(k)} = (x_{1k},\ x_{2k},\ \ldots, x_{nk})\in \{\sin\rho_k, -\cos\rho_k \}^n$ is the state vector corresponding to task $c_k$. 
When simultaneously learning related tasks $c_k$ and $c_r$, an usual approach consists in   expecting that the higher the relatedness $s_{rk}$, the closer the corresponding states. In our setting, this can be achieved by minimizing  
\begin{displaymath}
\label{eq:norm_diff}
 \Vert \bx^{(k)} - \bx^{(r)} \Vert^2\ ,
\end{displaymath}
for any couple of tasks $c_k, c_r \in C$, with $k\neq r$. {To this end, we incorporate  a term proportional to $\sum_k\sum_r s_{kr} \Vert \bx^{(k)} - \bx^{(r)} \Vert^2$ into the energy of $\mathcal{H}$, thus obtaining:}
\begin{equation}\label{eq:ExtEnergy}
E_\mathcal{H}(\bX) = \sum_{k=1}^m \left( E\big(\bx^{(k)}\big) + \frac{\alpha}{4} \sum_{\substack{r=1\\r\neq k}}^{m} s_{kr}\ \Vert \bx^{(k)} - \bx^{(r)} \Vert^2 \right),
\end{equation}
where $E\big(\bx^{(k)}\big) = -\frac{1}{2}{\bx^{(k)}}^T\bW\bx^{(k)} + {\bx^{(k)}}^T\gamma_k\be_n$, $\be_n$ is the $n$-dimensional vector made by all ones, and $\alpha$ is a
real hyper-parameter regulating the multitask contribution. 
Without the second additive term in brackets, energy~(\ref{eq:ExtEnergy}) would be the summation of the energy functions of $m$ independent singletask Hopfield networks, as recalled in the previous section.   

By using the equalities 
\begin{displaymath}
\Vert \bx^{(k)} - \bx^{(r)} \Vert^2 = \Vert \bx^{(k)}\Vert^2 + \Vert \bx^{(r)}\Vert^2 -2 \bx^{(k)}\cdot \bx^{(r)}\ ,
\end{displaymath}
 where $\cdot$ denotes the inner product, and giving that
  \begin{displaymath}
\sum_{k=1}^m\sum_{\substack{r=1\\r\neq k}}^{m}s_{kr} \big(\Vert \bx^{(k)}\Vert^2 + \Vert \bx^{(r)}\Vert^2\big) = 2\Big(\sum_{\substack{k=1}}^m S_k\Vert \bx^{(k)}\Vert^2 \Big)\ ,
 \end{displaymath}
with $S_{k}= \sum_{r=1}^{m}s_{kr}$, the energy (\ref{eq:ExtEnergy}) can be rewritten as:
\begin{equation}\label{eq:ExtEnergyS}
E_\mathcal{H}(\bX) = \sum_{k=1}^m \left( E\big(\bx^{(k)}\big) + \frac{\alpha}{2} \Big(S_k\sum_{i=1}^n x_{ik}^2 - \sum_{\substack{r=1\\r\neq k}}^{m} s_{kr}\sum_{i=1}^n x_{ik}x_{ir}\Big)
\right) .
\end{equation}

Informally,  $\mathcal{H}$ can be thought as $m$ interconnected singletask parametric Hopfield networks {$H_1=\langle \bW, \bgamma_1, \brho_1\rangle, \ldots, H_m=\langle \bW, \bgamma_m, \brho_m\rangle$} on $V$, having all the same topology given by $\bW$. 
In addition, the multitask energy term 
introduces self loops for all neurons, and a novel connection for each neuron $i\in V$ with $i$ itself in the network $H_r$, $r\in C\setminus c_k$, whose weight is $\alpha s_{kr}$ (see Fig.\ref{fig:Hsim}).
\begin{figure}[t]
\begin{center}
\caption{Topology of $\mathcal{H}$ in the case $m=3$. Black circles, gray squares and white circles represent elements of $L_-$, $L_+$ and $U$ respectively. The local topology is the same across sub-networks $\mathcal{H}_1$, $\mathcal{H}_2$ and $\mathcal{H}_3$, but the labeling varies with the task.}\label{fig:Hsim}
\vspace{0.15cm}
\scriptsize
\hspace{-0.3cm}\includegraphics[width=0.8\textwidth]{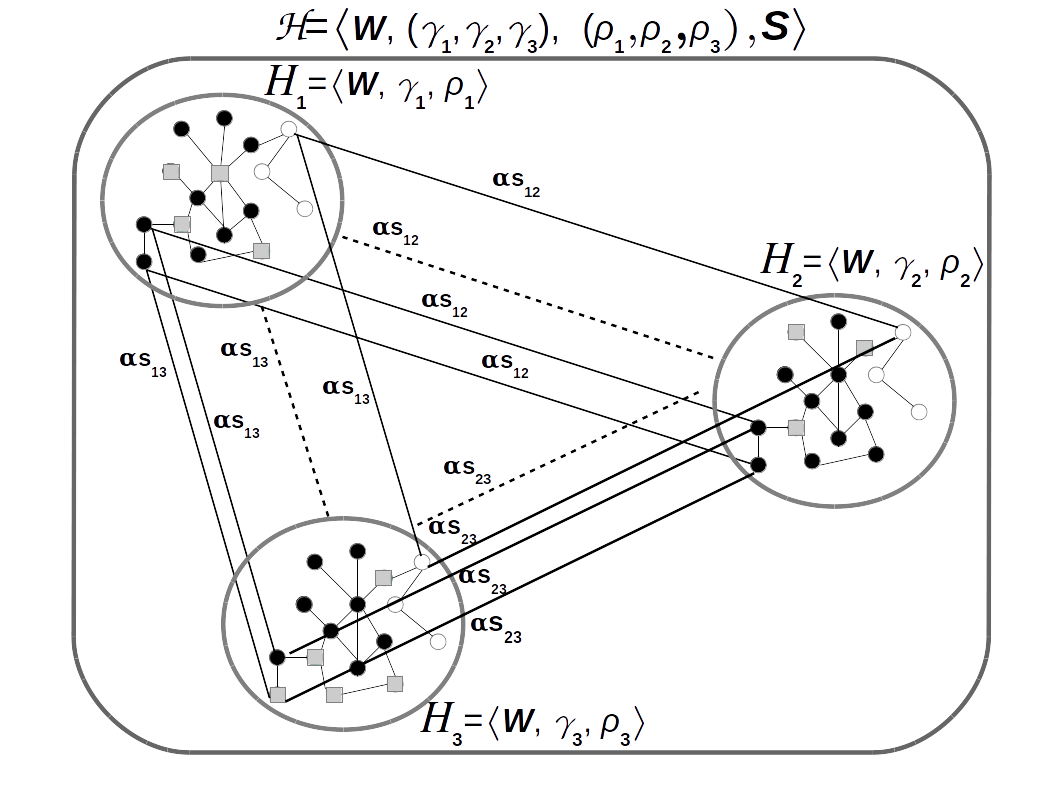}
\normalsize
\vspace{-1cm}
\end{center}
\end{figure}
It is worth nothing that usually in Hopfield networks there are no self-loops; nevertheless, we show that it does not affect the convergence properties of the overall network.  
\subsubsection{Update rule and dynamics convergence.}
Starting from an initial state $\bX(0)$ and adopting the asynchronous dynamic, in $nm$ steps all neurons are updated in random order according to the following update rule:
\begin{equation}\label{eq:HN_X_update}
x_{ik}(t+1)=\begin{cases}
\sin\rho_k, & \mbox{if\ } \phi_{ik}(t) > 0 \\
-\cos\rho_k, &\mbox{if\ } \phi_{ik}(t) \leq 0
\end{cases}
\end{equation}
where $x_{ik}(t+1)$ is the state of neuron $i\in X$ in task $c_k$ ($ik$-th) at time $t+1$,  
and \small
\begin{equation}\label{eq:phi_ik}
 \phi_{ik}(t) := A_{ik}(t) - \theta_{ik} + \alpha B_{ik}(t)
\end{equation} 
is the input of the {$ik$}-th neuron at time $t$,  whose terms are $A_{ik}(t)= \sum\limits_{j=1}^{n}w_{ij}x_{jk}(t)$, $\theta_{ik}=\gamma_k+ \frac{\alpha S_k}{2}\big(\sin\rho_k-\cos\rho_k\big)$, and $ B_{ik}(t)= \sum\limits_{\substack{r=1\\r\neq k}}^{m}s_{kr}x_{ir}(t)$.
{$A_{ik}$ represents the singletask input (eq.~\ref{eq:HN_update}), $B_{ik}(t)$ is the multitask contribution, and $\theta_{ik}$ is the activation threshold for neuron $ik$, including also the `singletask' threshold. The form of $\theta_{ik}$ derives from the following theorem, stating 
a HoMTask Hopfield network preserves the convergence properties of a Hopfield network.

\begin{theorem}{} 
A \textit{HoMTask Hopfield network} $\mathcal{H}=\langle \bW, \bgamma, \brho, \bS\rangle$ with $n$ neurons and the asynchronous dynamics (\ref{eq:HN_X_update}), which starts from any given network state, eventually reaches a stable state at a local minimum of the energy function~(\ref{eq:ExtEnergyS}).
\label{th:converg}\end{theorem}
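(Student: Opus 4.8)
The plan is to mimic the classical Lyapunov argument for Hopfield networks, showing that each asynchronous single-neuron update of rule~(\ref{eq:HN_X_update}) does not increase the energy~(\ref{eq:ExtEnergyS}), and that since the energy takes finitely many values, the dynamics must reach a fixed point, which is then necessarily a local minimum. The key algebraic point — and the reason the threshold $\theta_{ik}$ has the particular form stated — is to isolate the dependence of $E_\mathcal{H}(\bX)$ on a single variable $x_{ik}$ and verify that the update picks the value minimizing that restriction.

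First I would fix a neuron $ik$ and a time $t$, and write $E_\mathcal{H}$ as a function of $x_{ik}$ alone, holding every other entry of $\bX$ fixed at its current value. Collecting all terms of~(\ref{eq:ExtEnergyS}) that involve $x_{ik}$: the quadratic singletask term $-\tfrac12 {\bx^{(k)}}^T \bW \bx^{(k)}$ contributes $-x_{ik}\sum_{j\neq i} w_{ij} x_{jk}$ (using symmetry of $\bW$ and $w_{ii}=0$); the threshold term $\gamma_k \be_n^T \bx^{(k)}$ contributes $\gamma_k x_{ik}$; the self-loop term $\tfrac{\alpha}{2} S_k \sum_i x_{ik}^2$ contributes $\tfrac{\alpha}{2} S_k x_{ik}^2$; and the coupling term $-\tfrac{\alpha}{2}\sum_{r\neq k} s_{kr}\sum_i x_{ik}x_{ir}$ contributes $-\tfrac{\alpha}{2} x_{ik}\sum_{r\neq k} s_{kr} x_{ir} = -\tfrac{\alpha}{2} x_{ik} B_{ik}(t)$, and symmetrically the $r$-indexed copies of the coupling (where $ik$ plays the role of the inner summand with $k$ as the ``$r$'') contribute another $-\tfrac{\alpha}{2} x_{ik} B_{ik}(t)$, for a total of $-\alpha x_{ik} B_{ik}(t)$. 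Hence, up to a constant independent of $x_{ik}$,
\begin{equation}
E_\mathcal{H} = \tfrac{\alpha}{2} S_k\, x_{ik}^2 - \big(A_{ik}(t) - \gamma_k + \alpha B_{ik}(t)\big)\, x_{ik} + \text{const}.
\label{eq:restr}
\end{equation}

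Next I would exploit the crucial fact that $x_{ik}$ ranges over the two-element set $\{\sin\rho_k, -\cos\rho_k\}$, on which $x_{ik}^2 = \sin^2\rho_k$ when $x_{ik}=\sin\rho_k$ and $x_{ik}^2 = \cos^2\rho_k$ otherwise; the difference $\sin^2\rho_k - \cos^2\rho_k$ is exactly what gets absorbed into $\theta_{ik}$. Writing the two candidate values of~(\ref{eq:restr}) and subtracting, $E_\mathcal{H}\big|_{x_{ik}=\sin\rho_k} - E_\mathcal{H}\big|_{x_{ik}=-\cos\rho_k}$ simplifies (after dividing by $\sin\rho_k + \cos\rho_k > 0$, which holds since $\rho_k\in[\tfrac{\pi}{4},\tfrac{\pi}{2})$) to a negative multiple of $A_{ik}(t) - \theta_{ik} + \alpha B_{ik}(t) = \phi_{ik}(t)$, with $\theta_{ik} = \gamma_k + \tfrac{\alpha S_k}{2}(\sin\rho_k - \cos\rho_k)$ precisely as defined. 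Therefore $\phi_{ik}(t) > 0$ iff setting $x_{ik}=\sin\rho_k$ gives the strictly smaller energy, and $\phi_{ik}(t)\le 0$ iff $x_{ik}=-\cos\rho_k$ gives energy no larger; this is exactly update rule~(\ref{eq:HN_X_update}). Consequently $E_\mathcal{H}(\bX(t+1)) \le E_\mathcal{H}(\bX(t))$ at every step, with equality only when the updated neuron already held the energy-minimizing value.

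Finally I would close the argument: the state space is finite (at most $2^{nm}$ configurations), so $E_\mathcal{H}$ takes finitely many values along the trajectory; being nonincreasing, it must stabilize after finitely many steps. Once it is constant, no single-neuron update can strictly decrease it, so every neuron already holds its energy-minimizing value given the others — i.e. the state is a fixed point of~(\ref{eq:HN_X_update}). Moreover, since flipping any one coordinate $x_{ik}$ cannot decrease $E_\mathcal{H}$, the fixed point is a local minimum of the energy with respect to single-coordinate changes (the natural notion of neighborhood here), matching the claim. I expect the main obstacle to be purely bookkeeping: correctly tracking that the coupling term $s_{kr}\|\bx^{(k)}-\bx^{(r)}\|^2$ contributes to the energy restriction \emph{twice} for the pair $(i,k)$ — once with $k$ as outer index and once as inner — which is what produces the factor $\alpha$ (rather than $\tfrac{\alpha}{2}$) in front of $B_{ik}(t)$ in $\phi_{ik}$, and symmetrically the self-loop coefficient; getting these constants right is what makes the stated $\theta_{ik}$ and $\phi_{ik}$ fall out exactly.
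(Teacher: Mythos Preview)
Your proposal is correct and follows essentially the same Lyapunov-descent argument as the paper: isolate the terms of $E_\mathcal{H}$ depending on a single $x_{ik}$, check that the update rule~(\ref{eq:HN_X_update}) selects the value that does not increase the energy, and conclude termination at a fixed point from boundedness/finiteness. The only cosmetic difference is that the paper writes out $\Delta_{ik}E(t+1)$ directly and case-splits on the two nontrivial transitions (using $x_{ik}(t{+}1)+x_{ik}(t)=\sin\rho_k-\cos\rho_k$ whenever the state flips), whereas you compare the two candidate energy values and factor out $(\sin\rho_k+\cos\rho_k)$; both routes produce the same identity $E\big|_{\sin\rho_k}-E\big|_{-\cos\rho_k}=-(\sin\rho_k+\cos\rho_k)\,\phi_{ik}(t)$ and hence the same conclusion.
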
 
\begin{proof} Let $E_{ik}(t)$ be the energy contribution to (\ref{eq:ExtEnergyS}) of the $ik$-th neuron at time $t$, with
 \begin{equation*}
 \begin{split}
E_{ik}(t)=-\frac{1}{2}x_{ik}(t)\sum\limits_{j=1}^h (w_{ij}+w_{ji})x_{jk}(t)+\gamma_{k}x_{ik} +&\frac{\alpha S_k}{2} x_{ik}^2 -\\& \frac{\alpha}{2} x_{ik}\sum\limits_{\substack{r=1\\r\neq
 k}}^m (s_{kr}+s_{rk}) x_{ir}\ .
\end{split}
\end{equation*}

Let $\Delta_{ik}E(t+1) = E_{ik}(t+1)-E_{ik}(t)$ be the energy variation after updating the state $x_{ik}$ at time $t+1$ according to (\ref{eq:HN_X_update}). Due to the symmetry of $\bW$ and $\bS$, it follows
\begin{equation}
\begin{split}
\textstyle \Delta_{ik}E(t+1)=  -\big(x_{ik}&(t+1)-x_{ik}(t)\big)\\&\Big(A_{ik}(t)-\gamma_{k} - \frac{\alpha S_k}{2}\big(x_{ik}(t+1)+x_{ik}(t)\big)+\alpha B_{ik}(t)\Big).
\end{split}
\label{eq:delta_ik}
\end{equation}
\normalsize 
Since the energy (\ref{eq:ExtEnergyS}) is lower bounded, to complete to proof we need to prove that after updating $x_{ik}$ at time $t+1$ according to (\ref{eq:HN_X_update}), it holds $\Delta_{ik}E(t+1) \leq 0$. From (\ref{eq:delta_ik}), when $x_{ik}(t+1) = x_{ik}(t)$, that is when the neuron does not change state, it follows $\Delta_{ik}E(t+1) = 0$. Accordingly, we need to investigate the remaining two cases: (a) $x_{ik}(t)=\sin\rho_k$ and $x_{ik}(t+1)=-\cos\rho_k$; (b) $x_{ik}(t)=-\cos\rho_k$ and $x_{ik}(t+1)=\sin\rho_k$. In both cases it holds (by definition of $\theta_{ik}$) $\gamma_{k} - \frac{\alpha S_k}{2}\big(x_{ik}(t+1)+x_{ik}(t)\big)\ \substack{=}\ \theta_{ik}$.
\begin{enumerate}[(a)]
\item  $(x_{ik}(t+1)-x_{ik}(t)) = (-\cos\rho_k -\sin\rho_k)< 0$, and, according to~(\ref{eq:HN_X_update}), $A_{ik}(t) - \theta_{ik} + \alpha B_{ik}(t) \leq 0$. It follows $\Delta_{ik}E(t+1)\leq 0$.
\item $(x_{ik}(t+1)-x_{ik}(t)) = (\sin\rho_k +\cos\rho_k)> 0$, and $A_{ik}(t) - \theta_{ik}  + \alpha B_{ik}(t) > 0$. Thus $\Delta_{ik}E(t+1) < 0$.
\end{enumerate} 
Every neuron update thereby does not increase the energy of the network, and, since the energy is lower bounded, there will be a time $t'>0$ from which the update of any given neuron will not change the current state, which is the definition of equilibrium state of the network, and which makes $\bX(t')$ a local minimum of (\ref{eq:ExtEnergyS}).
\qed
\end{proof}
\subsubsection{Learning the model parameters.}
Considered the subnetwork $\mathcal{H_L}=\langle \bW_{\textrm{LL}}, \bgamma, \brho, \bS\rangle$ restricted to labeled nodes $L$, its energy is:
 \begin{equation}\label{eq:ExtEnergyS_L}
E_\mathcal{H_L}(\bL) = \sum_{k=1}^m \left( E_L\big(\bl^{(k)}\big) + \frac{\alpha}{2} \Big(S_k\sum_{i\in L} l_{ik}^2 - \sum_{\substack{r=1\\r\neq k}}^{m} s_{kr}\sum_{i\in L} l_{ik}l_{ir}\Big)
\right)\ ,
\end{equation}
where $\bL=(\bl^{(1)}, \bl^{(2)}, \ldots, \bl^{(m)})$ with components $\bl^{(k)}=(l_{1k},l_{2k},\dots, l_{(n-h)k})$ belonging to the set $\{\sin\rho_k, -\cos\rho_k \}^{(n-h)}$, and $E_L\big(\bl^{(k)}\big) = -\frac{1}{2}{\bl^{(k)}}^T\bW_{\textrm{LL}}\bl^{(k)} + {\bl^{(k)}}\cdot\gamma_k\be_{(n-h)}$.

The given bipartition $(L_{k,+},L_{k,-})$ for each task $c_k$ naturally induces the labeling $\bar\bl^{(k)}=\{\bar l_{1k}, \bar l_{2k}, \ldots,\bar l_{(n-h)k}\}$, defined as it follows:
\begin{equation*}
\bar l_{ik}=\begin{cases}
\sin\rho_k, &\mbox{if\ }  i \in L_{k,+}\\
-\cos\rho_k, &\mbox{if\ } i \in L_{k,-}\\
\end{cases},
\end{equation*}
and constituting the known {`multitask'} state $\bar\bL=(\bar\bl^{(1)}, \bar\bl^{(2)}, \ldots, \bar\bl^{(m)})$. 


Given $\bar\bL$ as known components of a final state $\bar\bX$ of the multitask network $\mathcal{H}=\langle \bW, \bgamma,\brho, \bS\rangle$, the purpose of the learning step is to compute the pair ($\hat\bgamma$, $\hat\brho$) which makes $\bar\bX$ an energy global minimizer of \eqref{eq:ExtEnergy}, the energy function associated with $\mathcal{H}$. Since our aim is also keeping the model scalable on large sized data, and finding
the global minimum of the energy requires time/memory intensive procedures, we employ a learning procedure leading $\bar\bL$ towards an fixed point of $\mathcal{H}_L$, being in general a local minimum of \eqref{eq:ExtEnergyS_L}. 
We provide the details of the learning procedure in the following, showing that such an approach also helps to handle the label imbalance at each task. 
\paragraph{Maximizing a cost-sensitive criterion.}
When the parameters $\bgamma, \brho$ are fixed, each neuron $ik$ has input 
\[
\begin{split}
    \phi^L_{ik}(\bgamma, \brho) = \sum_{j\in L}w_{ij}\Big(\sin\rho_k\chi_{jk} -\cos\rho_k&\big(1-\chi_{jk}\big)\Big) - \beta_k +\\& \alpha\sum_{\substack{r=1\\r\neq k}}s_{kr}\Big(\sin\rho_k\chi_{ir} -\cos\rho_k\big(1-\chi_{ir}\big)\Big)\ ,
    \end{split}
\]
where, for each $k\in \{1,\dots, m\}$ and $j\in L$, $\chi_{jk}=1$ if $j \in L_{k,+}$, $0$ otherwise. 
$\phi^L_{ik}$ corresponds to $\phi_{ik}$ of equation~(\ref{eq:phi_ik}) restricted to $L$; to simplify the notation,  in the following $\phi^L_{ik}$ is thereby denoted by $\phi_{ik}$. Since the subnetwork is labeled, it is possible to define the set of \textit{true positive} $tp_k(\bgamma, \brho)=\{i \in L_{k,+}|\phi_{ik}(\bgamma, \brho)>0\}$, \textit{false negative} $fn_k(\bgamma, \brho)=\{i \in L_{k,+}|\phi_{ik}(\bgamma, \brho)\leq 0\}$, and \textit{false positive} $fp_k(\bgamma, \brho)=\{i \in L_{k,-}|\phi_{ik}(\bgamma, \brho)>0\}$, for every task $c_k$. 
Following the approach proposed in~\cite{homcat}, a set of membership functions can be defined, extending the crisp memberships introduced above:

\begin{equation}\label{eq:fuzzy_memb}
\begin{array}{l}
\TP(i,k,\bgamma, \brho) = f(\tau\phi_{ik}(\bgamma, \brho)), \hspace{3.5cm} i \in L_{k,+}\\ [.1cm]
\FN(i,k,\bgamma, \brho) = 1 - f(\tau\phi_{ik}(\bgamma, \brho)), \hspace{2.9cm} i \in L_{k,+}\\ [.1cm]
\FP(i,k,\bgamma, \brho) = f(\tau\phi_{ik}(\bgamma, \brho)),\hspace{3.45cm} i \in L_{k,-}
\end{array}
\end{equation}
where $f:\mathbb{R} \rightarrow [0,1]$ is a suitable monotonically increasing membership function. For instance $f_1(x) = \nicefrac{1}{\big(1+e^ {(-x)}\big)}$ or $f_2(x) = \frac{1}{2}\big(\frac{2}{\pi} \arctg(x)+1\big)$. $\tau >0$ is a real parameter. If $f$ is the Heaviside step function, we obtain the crisp memberships. 
For example, when  $f=f_1$ or $f=f_2$, if $i \in L_{k,+}$ and $\tau \phi_{ik}(\bgamma, \brho) = 0$, if follows $\TP(i, k, \bgamma, \brho) =\FN(i, k, \bgamma, \brho) = 0.5$; if $i \in L_{k,+}$ and $\tau \phi_{ik}(\bgamma, \brho) \rightarrow \infty$, it follows $\TP(i, k, \bgamma, \brho) = 1$ and $\FN(i, k, \bgamma, \brho) = 0$. 
The intermediate cases lead to $ 0 <\TP(i,k, \bgamma, \brho), \FN(i,k, \bgamma, \brho) < 1$.

Such a generalization, in a different setting (singletask, multi-category) increased both the learning capability of the model and its classification performance~\cite{homcat}. 
By means of the membership functions~(\ref{eq:fuzzy_memb}), we can define the objective $F$:

\begin{equation}\label{eq:F}
\textstyle
\mbox{\textit{F}}(\bgamma, \brho) =  \sigma\big(F_1(\bgamma, \brho), F_2(\bgamma, \brho), \ldots, F_m(\bgamma, \brho)\big)\ ,
\end{equation}
\normalsize
where $$F_k(\bgamma, \brho)=  \frac{2\sum\limits_{i \in L_{k,+}}{\TP(i,k, \bgamma, \brho)}}{2\sum\limits_{i \in L_{k,+}}{\TP(i,k, \bgamma, \brho)} + \sum\limits_{i \in L_{k,-}}{\FP(i,k, \bgamma, \brho)} + \sum\limits_{i \in L_{k,+}}{\FN(i, k, \bgamma, \brho)}}$$
and $\sigma$ is an appropriately chosen function, e.g. the mean, the minimum, or the harmonic mean function. The property $\sigma$ must satisfy is that 

\[F(\bgamma, \brho) = 1 \Longrightarrow F_1(\bgamma, \brho)=F_2(\bgamma, \brho)=\ldots =F_m(\bgamma, \brho)=1.
\]
By definition, $F_k$ (a generalization of the F-measure) is penalized more by the misclassification of a positive instance than by the misclassification of a negative one. By maximizing $F(\bgamma, \brho)$ we can thereby cope with the label imbalance. To this end, the learning criterion for the model parameters adopted here is $
\displaystyle(\hat\bgamma, \hat\rho) = \arg\underset{\bgamma,\brho} {\max} \hspace{0.1cm} \mbox{\textit{F}}(\bgamma, \brho)$, which also leads to the following important result.
\begin{theorem}
If $\mbox{\textit{F}}(\bgamma,\brho) = 1$, then $\bar\bL$ is an equilibrium state of the sub-network $\mathcal{H_L}\langle W_{\textrm{LL}}, \bgamma, \brho, \bS\rangle$.
\label{fact:F-equilibrium}
\end{theorem}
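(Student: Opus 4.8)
The plan is to push the hypothesis $\mbox{\textit{F}}(\bgamma,\brho)=1$ through the aggregator $\sigma$ down to the individual tasks and then to recognise $\bar\bL$ as a fixed point of the update rule~(\ref{eq:HN_X_update}) restricted to $L$. First, the stipulated property of $\sigma$ gives $F_k(\bgamma,\brho)=1$ for every $k\in\{1,\ldots,m\}$. Fixing $k$, the denominator of $F_k$ equals its numerator plus $\sum_{i\in L_{k,-}}\FP(i,k,\bgamma,\brho)+\sum_{i\in L_{k,+}}\FN(i,k,\bgamma,\brho)$, so $F_k=1$ is equivalent to this residual sum vanishing; since every membership in~(\ref{eq:fuzzy_memb}) lies in $[0,1]$, each term must be $0$, i.e.\ $\FP(i,k,\bgamma,\brho)=0$ for all $i\in L_{k,-}$ and $\FN(i,k,\bgamma,\brho)=0$ for all $i\in L_{k,+}$. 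Reading~(\ref{eq:fuzzy_memb}) backwards this means $f(\tau\phi_{ik}(\bgamma,\brho))=0$ on $L_{k,-}$ and $f(\tau\phi_{ik}(\bgamma,\brho))=1$ on $L_{k,+}$; using $\tau>0$ together with the monotonicity of $f$ and $0<f(0)<1$ (the case of the Heaviside step function being immediate, with the thresholds $>0$ and $\le 0$ matching those of~(\ref{eq:HN_X_update}) exactly), I conclude $\phi_{ik}(\bgamma,\brho)\le 0$ for $i\in L_{k,-}$ and $\phi_{ik}(\bgamma,\brho)>0$ for $i\in L_{k,+}$.

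The remaining point is the observation that $\phi_{ik}(\bgamma,\brho)=\phi^L_{ik}(\bgamma,\brho)$ is precisely the local field~(\ref{eq:phi_ik}) of neuron $ik$ of $\mathcal{H}_L=\langle \bW_{\mathrm{LL}},\bgamma,\brho,\bS\rangle$ evaluated at the configuration $\bar\bL$: the indicator $\chi_{jk}$ reproduces $\bar l_{jk}=\sin\rho_k$ on $L_{k,+}$ and $\bar l_{jk}=-\cos\rho_k$ on $L_{k,-}$, so the weighted sum over $j\in L$ is $A_{ik}$ restricted to $L$; the constant $\beta_k$ coincides with the ($i$-independent) threshold $\theta_{ik}=\gamma_k+\frac{\alpha S_k}{2}(\sin\rho_k-\cos\rho_k)$ of Theorem~\ref{th:converg}, self-loop term included; and the last summand is the multitask contribution $\alpha B_{ik}$, which is unaffected by the restriction because all tasks share the same node set $L$. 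Hence, applying~(\ref{eq:HN_X_update}) to $\mathcal{H}_L$ in state $\bar\bL$ returns $\sin\rho_k=\bar l_{ik}$ at every $i\in L_{k,+}$ (where $\phi_{ik}>0$) and $-\cos\rho_k=\bar l_{ik}$ at every $i\in L_{k,-}$ (where $\phi_{ik}\le 0$): no neuron changes state, and, the asynchronous update being deterministic once the fields are fixed, the configuration is preserved under any update order. Therefore $\bar\bL$ is an equilibrium state of $\mathcal{H}_L$.

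I expect the main obstacle to be the identification made in the second paragraph, namely checking carefully that $\phi^L_{ik}$ — which is written in terms of the label-indicator $\chi$ and the constant $\beta_k$ — is literally the net input of neuron $ik$ when $\mathcal{H}_L$ sits in $\bar\bL$: one must confirm that $\beta_k$ absorbs the $\frac{\alpha S_k}{2}$ self-coupling exactly as in Theorem~\ref{th:converg} and that the inter-task term $\alpha\sum_{r\neq k} s_{kr}(\cdot)$ is the value of $\alpha B_{ik}$ at state $\bar\bL$ (so the activation values occurring in the neighbouring tasks must be handled consistently). Steps~1 and the final application of the update rule are then routine; no part of the argument uses properties of $\bW$ or $\bS$ beyond symmetry, which is already assumed.
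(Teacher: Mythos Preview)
The paper states Theorem~\ref{fact:F-equilibrium} without proof, so there is no argument to compare against; your proposal supplies exactly the details one would expect and is correct. The chain $F=1\Rightarrow F_k=1$ (via the stipulated property of $\sigma$), then $\sum\FP+\sum\FN=0$, then each $\FP=\FN=0$, then the sign of $\phi_{ik}$, and finally the fixed-point check against~(\ref{eq:HN_X_update}) is the natural route and each step is sound.

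Two small remarks. First, for the smooth choices $f_1,f_2$ the range of $f$ is $(0,1)$, so $F(\bgamma,\brho)=1$ is unattainable and the theorem is vacuously true there; your parenthetical on the Heaviside case is the contentful one, and you may want to say explicitly that for any monotone $f$ with $f(0)\in(0,1)$, either the hypothesis is vacuous or the attained values $0$ and $1$ force $\phi_{ik}<0$ and $\phi_{ik}>0$ as needed. Second, your caution about the inter-task term is well placed: the displayed formula for $\phi^L_{ik}$ in the paper uses $\sin\rho_k,\cos\rho_k$ in the last sum where $\sin\rho_r,\cos\rho_r$ is meant (so that the term equals $\alpha B_{ik}$ at $\bar\bL$); this is a typo in the paper, not a gap in your argument, and the paper's own remark that $\phi^L_{ik}$ is $\phi_{ik}$ restricted to $L$ confirms your identification $\beta_k=\theta_{ik}$.
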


\paragraph{Learning procedure.}
Denoted by $\bdelta = (\bgamma,\brho)$ the vector of model parameters, this procedure learns the values $\hat\bdelta$ that maximize eq. (\ref{eq:F}), that is $\displaystyle\hat\bdelta =  {\arg}\underset{\bdelta}{\max} \hspace{0.1cm} \mbox{$F$}(\bdelta).$  
Following the approach proposed in~\cite{homcat}, we adopt the \textit{simplest search  method}~\cite{Kordos08}, which adopts an iterative and incremental procedure estimating in turn a single parameter at a time,  by fixing the other ones, until a suitable  criterion is met (e.g. convergence, number of iterations, etc.). It thereby allows the complexity of the learning procedure to increase linearly with the number of tasks. 
In particular, for a fixed assignment of parameters $(\delta_1, \ldots, \delta_{i-1}, \delta_{i+1}, \ldots, \delta_{2m})$, we estimate $\hat \delta_i$ with the value  $\overline \delta_i = \argmax_{\delta_i} \hspace{0.1cm}\mbox{$F$}(\bdelta)$, $i \in \{1, \ldots,  2m\}$. The learning procedure is sketched below:
\begin{enumerate}[1.]
\item  Randomly permute the vector $\bdelta$, and randomly initialize $\bdelta$;
\item  Determine an estimate $\overline \delta_i$ of $\hat \delta_i$ with a standard line search procedure for optimizing continuous functions of one variable, and fix $\delta_i = \overline \delta_i$;
\item  Iterate Step 2 for each $i \in \{1, 2, \ldots, 2m\}$;
\item  Repeat Step 3 till a stopping criterion is satisfied.
\end{enumerate}
 As stopping criterion we used a combination of the maximum number of iterations and of the maximum norm of the difference of two subsequent estimates $\overline \bdelta$  (falling below a given threshold).
As initial test, at Step 2 we simply adopted a grid search optimization algorithm, 
where a  set of trials is formed for each parameter, and all possible
parameter combinations are assembled and tested.
\subsubsection{Label inference.}\label{subsub:dyn}
 Once the parameters $\hat\bgamma, \hat\brho$ have been estimated, we consider the subnetwork $\mathcal{H_U}=\langle \bW_{\textrm{UU}}, \hat\bgamma, \hat\brho,  \bS\rangle$ restricted to the unlabeled nodes $U$, whose energy is
 \begin{equation}\label{eq:ExtEnergyS_U}
E_\mathcal{H_U}(\bU) = \sum_{k=1}^m \left( E_U\big(\bu^{(k)}\big) + \frac{\alpha}{2} \Big(S_k\sum_{i=1}^h u_{ik}^2 - \sum_{\substack{r=1\\r\neq k}}^{m} s_{kr}\sum_{i=1}^h u_{ik}u_{ir}\Big),
\right) 
\end{equation}
with $\bU=(\bu^{(1)}, \bu^{(2)}, \ldots, \bu^{(m)})$   state of $\mathcal{H_U}$, $\bu^{(k)} = (u_{1k},\ u_{2k},\ \ldots, u_{hk}) = (x_{1k},\ x_{2k},\ \ldots, x_{hk})\in \{\sin\hat\rho_k, -\cos\hat\rho_k \}^h$, $E_U\big(\bu^{(k)}\big) = -\frac{1}{2}{\bu^{(k)}}^T\bW_{UU}\bu^{(k)} + {\bu^{(k)}}^T\overline\btheta_k$, and $\overline\btheta_k = \hat\gamma_k\be_h - W_{\textrm{UL}}\bar\bl^{(k)}$ is the vector of activation thresholds for task $c_k$, including the contribution of labeled nodes (which are clamped).

In the case the learned parameters make $\bar\bL$ a part of global minimum of $\mathcal{H}$, by determining the global minimum of $\mathcal{H_U}$, we can successfully determine the global minimum of  $\mathcal{H}$ (as stated by the following theorem), and consequently the solution of the problem. 
 \begin{theorem}{} 
Given a \textit{multitask Hopfield network} $\mathcal{H}=\langle W, \bgamma, \brho, \bS\rangle$ on neurons $V$, bipartitioned into the sets $L$ and $U$, if $\bL$ is a part of a global minimum of the energy of $\mathcal{H}$, and  $\bU$ is a global minimum of the energy of $\mathcal{H}_U=\langle W_{UU}, \bgamma, \brho, \bS\rangle$, then $(\bL,\bU)$ is a global minimum of the energy of $\mathcal{H}$.
\label{th:subnet}\end{theorem}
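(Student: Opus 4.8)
The plan is to establish an exact additive decomposition
$$E_\mathcal{H}(\bL,\bU) = E_\mathcal{H_L}(\bL) + E_\mathcal{H_U}(\bU)$$
of the energy (\ref{eq:ExtEnergyS}) into its labeled part (\ref{eq:ExtEnergyS_L}) and its unlabeled part (\ref{eq:ExtEnergyS_U}), the latter evaluated with the labeled neurons clamped to $\bL$, and then to read off the claim by a short comparison argument. To obtain the decomposition I would write, for each task $c_k$, the state vector as $\bx^{(k)} = (\bl^{(k)}, \bu^{(k)})$ and expand the three groups of terms in (\ref{eq:ExtEnergyS}) blockwise. The quadratic form splits, using the symmetry of $\bW$, as $-\frac12{\bx^{(k)}}^T\bW\bx^{(k)} = -\frac12{\bl^{(k)}}^T\bW_{LL}\bl^{(k)} - \frac12{\bu^{(k)}}^T\bW_{UU}\bu^{(k)} - {\bu^{(k)}}^T\bW_{UL}\bl^{(k)}$; the linear threshold term $\gamma_k{\bx^{(k)}}^T\be_n$ and the entire multitask term $\frac{\alpha}{2}\big(S_k\sum_i x_{ik}^2 - \sum_{r\neq k}s_{kr}\sum_i x_{ik}x_{ir}\big)$ are sums of per-node contributions and therefore split with no $L$--$U$ cross terms. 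Summing over $k$ and regrouping, the $L$-only terms reconstruct $E_\mathcal{H_L}(\bL)$, while the $U$-only terms together with the coupling $-\sum_k{\bu^{(k)}}^T\bW_{UL}\bl^{(k)}$ reconstruct $E_\mathcal{H_U}(\bU)$ once one recognizes that the coupling is exactly the contribution absorbed into the thresholds $\overline\btheta_k = \gamma_k\be_h - \bW_{UL}\bl^{(k)}$ of (\ref{eq:ExtEnergyS_U}).

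Once the decomposition is in hand, I would close the argument as follows. Since $\bL$ is a part of a global minimum of $E_\mathcal{H}$, there exists a state $\bU^\ast$ such that $(\bL,\bU^\ast)$ is a global minimizer of $E_\mathcal{H}$. Since $\bU$ is a global minimizer of $E_\mathcal{H_U}$ (with the labeled block clamped to $\bL$), we have $E_\mathcal{H_U}(\bU) \le E_\mathcal{H_U}(\bU^\ast)$; adding $E_\mathcal{H_L}(\bL)$ to both sides and invoking the decomposition gives $E_\mathcal{H}(\bL,\bU) \le E_\mathcal{H}(\bL,\bU^\ast)$. The reverse inequality holds because $(\bL,\bU^\ast)$ is a global minimum, hence $E_\mathcal{H}(\bL,\bU) = E_\mathcal{H}(\bL,\bU^\ast)$ equals the global minimum value of $E_\mathcal{H}$, i.e. $(\bL,\bU)$ is itself a global minimum.

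The only delicate point, and the one I would expect to require the most care, is checking that the decomposition is genuinely exact, i.e. that no term of (\ref{eq:ExtEnergyS}) coupling $L$ and $U$ is left unaccounted for. This comes down to two observations: (i) the self-loop terms $\frac{\alpha}{2}S_k x_{ik}^2$ and the pairwise multitask terms $-\frac{\alpha}{2}s_{kr}x_{ik}x_{ir}$ couple only the task index at a fixed node, so they never mix an $L$-node with a $U$-node; and (ii) the sole graph coupling between $L$ and $U$ is through the off-diagonal blocks $\bW_{LU}=\bW_{UL}^T$, whose combined contribution (after using symmetry) is precisely $-\sum_k{\bu^{(k)}}^T\bW_{UL}\bl^{(k)}$, exactly the term folded into $\overline\btheta_k$ in (\ref{eq:ExtEnergyS_U}). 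Note that this reasoning uses only the block-separable structure of the energy with a clamped boundary; it requires neither $\bL=\bar\bL$ nor any property of the learned parameters $(\hat\bgamma,\hat\brho)$.
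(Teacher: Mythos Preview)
The paper states Theorem~\ref{th:subnet} without proof, so there is no original argument to compare against. Your proposal is correct: the additive decomposition $E_\mathcal{H}(\bL,\bU)=E_{\mathcal{H}_L}(\bL)+E_{\mathcal{H}_U}(\bU)$ holds exactly for the reasons you give (the multitask terms act nodewise, and the only $L$--$U$ coupling is the $\bW_{UL}$ block absorbed into $\overline\btheta_k$), and the ensuing comparison with a global minimizer $(\bL,\bU^\ast)$ closes the argument cleanly. Your remark that the result depends only on the block-separable structure, not on $\bL=\bar\bL$ or on the learned parameters, is also correct and worth keeping.
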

On the other side, computing the energy global minimum of $\mathcal{H}_U$ would require time intensive algorithms; accordingly, to preserve the model efficiency and scalability, we run the dynamics of $\mathcal{H}_U$ till an equilibrium state is reached, which, in general, is an energy local minimum. 

Given an initial state $\bU(0)$, at each time $t$ one neuron is updated, and in $nm$ consecutive steps all neurons are updated asynchronously and in a randomly chosen order according to the following update rule:
\begin{equation}\label{eq:HN_U_update}
u_{ik}(t+1)=\begin{cases}
\sin \hat\rho, &\mbox{if } \phi^U_{ik}(t)>0\\
-\cos\hat\rho, &\mbox{if\ } \phi^U_{ik}(t)  \leq 0
\end{cases},
\end{equation}
where $u_{ik}(t+1)$ is the state of neuron $ik$ at time $t+1$, and $\phi^U_{ik}(t)$ is the restriction of $\phi_{ik}(t)$ to $U$.
According to Theorem~\ref{th:converg}, the dynamics (\ref{eq:HN_U_update}) converges to an equilibrium state $\bar\bU$ of $\mathcal{H}_U$, and the predicted bipartition  $(U_{k,+},U_{k,-})$  for task $k$ is: $U_{k,+} := \{i \in U | \bar u_{ik} = \sin \hat\rho\}$ and 
$U_{k,-} := \{i \in U | \bar u_{ik} = -\cos \hat\rho\}$.

\subsubsection{Dynamics regularization.}
As shown by~\cite{Frasca13}, the network dynamics might get stuck in trivial equilibrium states when input labeling are highly unbalanced ---e.g. states made up by almost all negative neurons.  
To prevent this behaviour, they applied a dynamics regularization, with the aim to control the number of positive neurons in the current state. By extending that approach, and denoted by $p_{k,+} = \frac{|L_{k,+}|}{|L|}$ the proportion of positives in the training set for task $c_k$, we add to the energy function $E_\mathcal{H_U}(\bU)$ the regularization term 
\begin{equation}
\eta_k\left(\sum_{i=1}^{h}(a_ku_{ik} + b_k) - hp_{k,+} \right)^2 ,
\label{eq:new_term}
\end{equation} 
where $a_k = \frac{1}{\sin\hat\rho_k + \cos\hat\rho_k}$, $b_k = \frac{\cos\hat\rho_k}{\sin\hat\rho_k + \cos\hat\rho_k} $, and $\eta_k$ is a real regularization parameter. 
Since $a_k$ and $b_k$ are such that $(a_ku_{ik} + b_k) = 1$ when $u_{ik}=\sin\hat\rho_k$, $0$ otherwise, 
the $\sum_{i=1}^{h}(a_ku_i + b_k)$ is the number of positive neurons in $\bu^{(k)}$. The term~(\ref{eq:new_term}) is thereby minimized when the number of positive neurons in $\bu^{(k)}$ is  $hp_{k,+}$. This choice is motivated by the fact that 
\[ hp_{k,+} = \arg \underset{q} \max \hspace{0.2cm}  Prob\left\{|U_{k,+}| = q \ \vert\ L \ \mbox{contains}\ |L_{k,+}|\  \mbox{positives} \right\}, 
\]
when $U$ and $L$ are randomly drawn from $V$ ---see~\cite{Frasca13}.  By simplifying eq.~(\ref{eq:new_term}),  up to a constant terms, we obtain the quadratic term:
\begin{displaymath}
\eta_k a_k\Big(a_k\sum\limits_{i=1}^h\sum\limits_{\substack{j=1\\j\neq i}}^h u_{ik}u_{jk} + \big(2b_k(h-1)+1-2p_{k,+}  \big)\sum\limits_{i=1}^h u_{ik}\Big),
\end{displaymath}
which can be thereby included into $E_U(\bu^{(k)})$:
\begin{displaymath}
E_U\big(\bu^{(k)}\big) = -\frac{1}{2} \sum_{i=1}^{h}\sum_{\substack{j=1 \\ j \neq i}}^{h} w_{ij}^{(k)}u_{ik} u_{jk} + \sum_{i=1}^{h} u_{ik}\tilde{\theta}_{ik},
\end{displaymath}
where $\tilde{\theta}_{ik} = \overline\theta_{ik} + \eta_k a_k\left[2b_k(h-1)+(1-2p_{k,+}h)\right]$ and $w_{ij}^{(k)}=(w_{ij} - 2\eta_k a_k^2)$. By adding a regularization term for each task $c_k$, we obtain the following overall energy:
\begin{equation*}
\begin{split}
    E_\mathcal{H_U}(\bU)=\sum_{k=1}^m \Bigg(-\frac{1}{2}{\bu^{(k)}}^T\bW^{(k)}_{\textrm{UU}}\bu^{(k)} + &{\bu^{(k)}}^T\tilde\btheta_k + \\& \frac{\alpha}{2} \Big(S_k\sum\limits_{i=1}^h u_{ik}^2 - \sum\limits_{\substack{r=1\\r\neq k}}^{m} s_{kr}\sum_{i=1}^h u_{ik}u_{ir}\Big)\Bigg)
\end{split}
\end{equation*}
Informally, this regularization leads to a different network topology for each task, in addition to a modification of the neuron activation thresholds. Nevertheless, since the connection weights are modified by a constant value, from an implementation standpoint this regularization just need to memorize $m$ different constant values, thus not increasing the space complexity of the model. 
As preliminary approach, and to have a fair comparison, the parameters $\eta_k$ have been set as for the singletask case \cite{Frasca13},
that is $ \eta_k = \begin{array}{rl}
\beta \big|\tan\big((\hat\rho_k - \frac{\pi}{4})*2\big)\big| \hspace{.0cm}& \mbox{,}  
\end{array}$
where $\beta$ is a non negative real constant. Another advantage of this choice is that we have to learn just one parameter $\beta$, instead of $m$ dedicated parameters.
\section{Preliminary results and discussion}
In this section we evaluate our algorithm on the prediction of the bio-molecular functions of proteins, a binary classification problem aiming at associating sequenced proteins with their biological functions. In the following, we describe the experimental setting, analyze the impact on performance of parameter configurations, and finally we compare our algorithm against  other state-of-the-art graph-based methods.
\subsection{Benchmark data}
In our experiments we considered the Gene Ontology~\cite{GO00} terms, i.e. the reference functional classes in this context,  and their annotations to the \textit{Saccaromyces cerevisiae} (yeast) proteins, one of the most studied model organisms. The connection matrix $\bW$ has been  retrieved from the STRING database, version 10.5~\cite{STRING10}, and contains $6391$ yeast proteins.
As common in this context, the GO terms with less than $10$ and more than $100$ yeast protein annotations (positives) have been discarded, in order to have a minimum of information and to avoid too generic terms ---GO is a DAG, where annotations for a term are transferred to all its ancestors.  
We considered the UniProt GOA  (release $87$, 12 March 2018) experimentally validated annotations from all GO branches, Cellular Component (CC), Molecular Function (MF) and Biological Process (BP), for a total of $162$, $227$, and $660$ CC, MF, BP GO terms, respectively. 
\subsection{Evaluation setting}
To evaluate the generalization capabilities of our algorithm, we used a $3$-fold cross validation (CV), and measured
the performance in terms of  Area Under the ROC curve (AUC) and Area Under the Precision-Recall curve (AUPR). The AUPR has been adopted in the recent CAFA2 international challenge for the critical assessment of protein functions~\cite{CAFA2}, since in this imbalanced setting AUPR is more informative than AUC~\cite{Saito15}. 
\subsection{Model configuration}
HoMTask has three hyper-parameters, $\tau$, $\beta$ and $\alpha$, and two functions to be chosen: $f$ in eq.~(\ref{eq:fuzzy_memb}), and $\sigma$ in eq.~(\ref{eq:F}).
$\tau$, $\beta$  and $\alpha$ were learned through internal $3$-fold CV, considering also the cases $\alpha$ and $\beta$ in turn or together clamped to $=0$, to evaluate their individual impact on the performance. A different discussion can be made for the $\tau$ parameter, since in our experimentations best performance correspond to large values of $\tau$ (e.g. $\tau>500$), thus making the model less sensitive to this choice (the function $f$ becomes a Heaviside function). This behaviour apparently conflicts with results reported in~\cite{homcat}, where typically  $0.5<\tau<2$ performed best. 
However, in that work the authors focused on a substantially different learning task, i.e.  a singletask Hopfield model, where nodes were divided into categories, and the model parameters were not related to different tasks, but to different node categories. We still include $\tau$ in the formalization proposed in Section~\ref{sub_homtask} because it permits also future analytic studies about the derivatives of $\sigma$, to determine close formulations for the optimal parameters. Further,
We set $f(x)=\frac{1}{2}\big(\frac{2}{\pi} \arctg(x)+1\big)$, since this choice in a multi-category context leaded to excellent results~\cite{homcat}, even if different choices are possible (Section~\ref{sub_homtask}).

On the other side, we tested two choices for $\sigma$: the harmonic mean ($\sigma_1$) and mean functions ($\sigma_2$).
Furthermore, another central factor of our model is the computation of the task similarity matrix $\bS$, which can be computed by using several metrics (see for instance~\cite{gene2disco}), and how to group the tasks that should be learned together. We employed in this work the Jaccard similarity measure, since it performed nicely in hierarchical contexts~\cite{gene2disco,VASCON18,MTLP}, defined as follows:
\begin{displaymath}
s_{kr} =  \left\{ \begin{array}{cl}
{\displaystyle \frac{\big|L_{k,+} \wedge L_{r,+}\big|}{\big|L_{k,+} \vee L_{r,+}\big|} } & \text{if $L_{k,+} \vee L_{r,+} \neq \emptyset$} 
\\
0  & \text{otherwise.}
\end{array}
\right.
\end{displaymath}
Thus, $s_{kr}$ is the ratio between the number of instances that are positive for both tasks and the number of instances that are positive for at least one task. The higher the number of shared instances, the higher the similarity (up to $1$); conversely, if two task do not share items, their similarity is zero. 

Finally, we grouped tasks by GO branch, and by GO branch and number of positives: in the first case (\textit{Branch}), all tasks within a GO branch are learned simultaneously; in the latter one (\textit{Card}), tasks in the same branch having $10$-$20$, $21$-$50$, or $51$-$100$ positives have been grouped together. Both approaches are quite usual when predicting GO terms~\cite{Mostafavi10,MTLP}.   

Table~\ref{tab:config} reports the obtained results on the $CC$ terms.
First, the two different strategies for grouping tasks led to similar results in this setting, with the \textit{Branch} grouping  being experimentally slower because the learning procedure needs more iterations to converge when the number of parameters increases (due to the max norm adopted here as stopping criterion). Nevertheless, we remark that no thresholding on the matrix $\bS$ has been applied in both cases; thus, in the same model even tasks with small similarities can be included, which in principle might introduce noise in the learning and inference processes. Consequently, the advantage of jointly learning a larger number of similar tasks can be compensated by this potential noise;  investigating other task grouping and similarity thresholding strategies could thereby give rise to further insights about model, which for lack of room we destine to future study. 

\begin{table}[t]
\caption{Performance averaged across CC terms for different configuration of the model.}\label{tab:config}
\centering
\small
\vspace{-0.2cm}
\begin{tabular}{lcc}
  \hline
Configuration & AUC & AUPR \\ 
Branch, $\sigma_1$ & 0.961 & 0.439\\
Card, $\sigma_1$ & 0.959 & 0.439\\
Card, $\sigma_1$, $\alpha=0$ & 0.959 & 0.431\\
Card, $\sigma_1$, $\beta=0$ & 0.810 & 0.204\\ 
Card, $\sigma_1$, $\alpha=\beta=0$ & 0.811 & 0.204\\ 
Card, $\sigma_2$ & 0.937 & 0.312 \\ 
   \hline
\end{tabular}
\vspace{-0.35cm}
\normalsize
\end{table}

Regarding the impact of parameter $\beta$, regulating the effect of dynamics regularization, a strong decay in performance is obtained when no regularization is applied ($\beta = 0$): this confirms the tendency of the network trajectory to be attracted in some limit cases by trivial fixed points, already observed in the singletask Hopfield model~\cite{Frasca13}. In this experiment, the contribution of regularization is still more dominating, since it allows to double the AUPR performance.

Less impact on the performance apparently has the parameter $\alpha$, regulating the multitask energy term (Eq. (\ref{eq:ExtEnergy})). Indeed, the performance reduces just of $0.008$ when $\alpha=0$; however, this behaviour should be further studied, because it can be strictly related to the noise we introduced by grouping tasks without filtering out connections between less similar task. Thus, further experiments with different organisms would help this analysis and potentially reveal novel and more clear trends. 
It is also important noting that by setting $\alpha=0$, the overall multitask contribution is not cancelled: the learning procedure, by maximizing criterion~(\ref{eq:F}), still learns tasks jointly, even when the multitask contribution in formula~(\ref{eq:fuzzy_memb}) is removed. For instance, choosing $\sigma$ equal to the minimum function would mean learning individual task parameters in order to maximize the minimum performance ($\min_k F_k$) across tasks, even when $\alpha=0$. 

Finally, the function $\sigma$ itself seems having a marked impact on the model. When using the mean function ($\sigma_2$) the AUPR decreases of around $25\%$ with respect to the AUPR obtained using the harmonic mean ($\sigma_1$). To some extent such a result is expected, since the harmonic mean tends to penalize more the outliers towards $0$, thus fostering the learning procedure to estimate the parameters in order not to penalize some tasks in favors of the remaining ones, which instead can happen when using the mean function.

This preliminary analysis about the model suggested to adopt the configuration ``Card, $\sigma_1$" in the comparison with the state-of-the-art methodologies, which is described in the next section. 
\subsection{Model performance}\label{sub:soaComp}
We compared our method with several state-of-the-art graph-based algorithms, ranging from  singletask Hopfield networks and other multitask methodologies, to some methods specifically designed to predicting protein functions: \textit{RW, random walk}~\cite{Lovasz96}, the classical $t$-step random walk algorithm, predicting a score corresponding to the probability that a $t$-step random walk in $G$, starting from positive nodes, ends in the node to be predicted; \textit{RWR, random walk with restart}, since in RW after many steps the walker may forget the prior information coded in the initial probability vector (0 for negative nodes  $1/|L_{k,+}|$ for positive nodes), RWR allows the walker to move another random walk step with probability $1-\theta$, or to restart from its initial condition with probability $\theta$; \textit{GBA, guilt-by-association}~\cite{Schwikowski00}, a method  based on the assumption that interacting proteins are more likely to share similar functions; \textit{LP, label propagation}~\cite{Zhu03}, a popular semi-supervised learning algorithm which propagates labels to unlabeled nodes through an iterative process based on Gaussian random fields over a continuous state space; \textit{MTLP, MTLP-inv}~\cite{MTLP}, two recent multitask extensions of LP, exploiting task dissimilarities (MTLP) and similarities (MTLP-inv); \textit{MS-kNN, Multi-Source k-Nearest Neighbors}~\cite{Lan13}, a method based on the $k$-Nearest Neighbours ($k$NN) algorithm~\cite{Altman92}, among the top-ranked methods in the recent CAFA2 international challenge for AFP~\cite{CAFA2}; \emph{RANKS}~\cite{RANKS}, a recent graph-based method proposed to rank proteins, adopting a suitable kernel matrix to extend the notion of node similarity also to non neighboring nodes; \textit{COSNet}. We used the same approach in~\cite{Frasca13bis} to compute a node ranking, necessary to calculating both AUC and AUPR.

In Table~\ref{tab:SOA_COMP} we show the obtained results. Our method achieves the highest AUPR in all the experiments, with statistically significant difference over the second top method (RWR) in $2$ out of $3$ experiments (Wilcoxon signed rank test, $\alpha=0.05$). The  performance improvement  compared with COSNet is noticeable, showing the remarkable contribution supplied by our multitask extension. Interestingly, MTLP and MTLP-inv increase the AUPR results of LP not so remarkably as HoMTask: this means that the further information regarding task similarities should be appropriately exploited in order to achieve relevant gains. RANKS is the third method in all experiments, followed by  MTLP(-inv), while MS-$k$NN is surprisingly the last method.  
Our method achieves good results also in terms of AUC (which however is less informative in this context), being close to top performing methods (RWR on CC and MF, and MTLP-inv on BP terms). 
\begin{table}[t]\label{tab:SOA_COMP}
\caption{Performance comparison averaged across GO branches. In bold the top results, underlined when statistically different than the second top result.}
\centering
\scriptsize
\begin{tabular}{lcccccccccc}
  \hline
 & RW & RWR & GBA & LP & MTLP & MTLP-inv & MS-kNN & RANKS & COSNet & HoMTask\\ 
  \hline
 \multicolumn{11}{c}{AUC}\\
CC & 0.954 & \textbf{0.966} & 0.944 & 0.964 & 0.957 & 0.964 & 0.790 & 0.958 & 0.904  & 0.959\\ 
  MF & 0.934 & \textbf{0.955} & 0.931 & 0.951 & 0.939 & 0.953 & 0.742 & 0.945 & 0.859 & 0.945\\ 
   BP & 0.943 & 0.959 & 0.935 & 0.955 & 0.947 & \textbf{0.961} & 0.764 & 0.949 & 0.855 & 0.954\\  
   \hline
\multicolumn{11}{c}{AUPR}\\
CC & 0.367 & 0.437 & 0.207 & 0.308 & 0.343 & 0.342 & 0.218 & 0.398 & 0.361 & \textbf{0.439}\\ 
  MF & 0.199 & 0.272 & 0.125 & 0.201 & 0.229 & 0.234 & 0.090 & 0.236 & 0.214 & \underline{\textbf{0.291}}\\ 
   BP & 0.244 & 0.313 & 0.145 & 0.224 & 0.246 & 0.250 & 0.116 & 0.271 & 0.241 & \underline{\textbf{0.330}}\\  
   \hline\\
\end{tabular}
\vspace{-0.8cm}
\normalsize
\end{table}
\section*{Conclusions}
We have proposed the first multitask Hopfield Network for classification purposes, HoMTask, capable to simultaneously learn multiple tasks and to cope with the label imbalance. In our validation experiments, it significantly outperformed singletask HNs, and favorably compared with state-of-the-art single and multitask graph-based methodologies. 
Future investigations might reveal novel insights about the model, in particular regarding the choice of the task relatedness matrix, the task grouping strategy, the multitask criterion to be optimized during the learning phase, and the optimization procedure itself.     
\bibliographystyle{splncs04}
\bibliography{HMTred}

\end{document}